\documentclass[11pt]{article}

\usepackage[preprint]{acl}

\usepackage{times}
\usepackage{latexsym}
\usepackage{amsmath}
\usepackage{amssymb}
\usepackage{amsthm}
\usepackage{algorithm}
\usepackage{algorithmic}
\newtheorem{lemma}{Lemma}
\newtheorem{proposition}{Proposition}
\usepackage{array}
\usepackage{multirow}
\usepackage{booktabs}
\usepackage{enumitem}

\usepackage[T1]{fontenc}

\usepackage[utf8]{inputenc}

\usepackage{microtype}

\usepackage{inconsolata}

\usepackage{graphicx}

\title{GraphACM: Graph-based Action Consistency Measurement for Uncertainty Quantification in LLM Agents}

\author{
  \textbf{Yaxin Ma\textsuperscript{1}},
  \textbf{Wenbo Zhang\textsuperscript{2}},
  \textbf{Satya Krishna Pothapragada\textsuperscript{1}},
  \textbf{Alina Zare\textsuperscript{1}}
\\
\\
  \textsuperscript{1}University of Florida, Gainesville, FL, USA \\
  \textsuperscript{2}University of California, Irvine, CA, USA
\\
  \small{
    \textbf{Correspondence:} \href{yaxinma@ufl.edu}{yaxinma@ufl.edu}
  }
}

\begin{document}
\maketitle
\begin{abstract}
Behavioral consistency across repeated executions provides an informative signal for assessing the reliability of large language model (LLM) agents. However, existing approaches typically summarize consistency with a single task-level score, without identifying where behavioral differences occur or whether actions are compared under equivalent decision contexts. We propose a graph-based framework for fine-grained behavioral consistency measurement and uncertainty quantification. Given multiple rollouts of the same task, we merge equivalent states into a directed, action-labeled graph, enabling action comparisons at shared decision states. We introduce node-level and global measures of action consistency, using the global consistency score as a proxy for uncertainty to predict task success or failure. Experiments across three LLMs on ALFWorld, AppWorld, and WebShop demonstrate that our measure is positively associated with task success and achieves an average AUROC of 0.811 for task-success prediction, outperforming the strongest behavioral-consistency baseline (0.729) and verbalized confidence (0.769). These results highlight the effectiveness of state-conditioned behavioral consistency and pinpoint where agent actions agree. 
\end{abstract}

\section{Introduction}
\label{sec:introduction}
Large language model agents extend text generation to multi-step interaction with external environments~\citep{wang2024survey}. Frameworks such as ReAct interleave reasoning, actions, and environmental observations to execute tasks~\citep{yao2023react}. These capabilities support applications ranging from software development~\citep{yang2024sweagent} to personal-assistant tasks~\citep{trivedi2024appworld} and scientific research~\citep{lu2024aiscientist}.

As agents move toward real-world deployment, how to evaluate them and when to trust them remain open challenges~\citep{kapoor2025agents, kirchhof2025position}. Evaluating task success based on the final outcome is a standard approach, but it reveals little about the intermediate action behavior or whether the agent would succeed again on the same task. Recent work therefore studies \emph{behavioral consistency}, which measures the degree to which an agent behaves similarly across repeated executions of the same task. Behavioral consistency has been used as a signal of reliability~\citep{rabanser2026reliability}, with higher consistency indicating greater reliability. It can also be used for uncertainty quantification~\citep{mehta2026agents}: the more the executions agree, the more confidence the agent's behavior warrants.

Existing approaches typically compare behavioral consistency at the outcome or sequence level. Outcome-level consistency assess agreement among final outputs without characterizing intermediate decisions~\citep{yao2025taubench}. Sequence-level consistency quantify agreement across complete action sequences, such as counting the number of unique sequences~\citep{mehta2026agents} or computing Discrete Semantic Entropy (DSE) over clusters of identical sequences~\citep{ye2026function}. Such exact-match measures treat sequences differing by even a single inserted action as distinct, making them sensitive to minor shifts in action sequences. Other methods allow partial agreement: Tool Sequence Similarity (TSS) uses edit-based similarity between tool-name sequences, while Argument Consistency (AC) compares arguments of step-aligned tool calls~\citep{yagubyan2026consistent}. Action-Set Consistency (ASC) and Action-Distribution Consistency (ADC) discard temporal order by comparing action-type sets and frequency distributions, respectively~\citep{bouchard2026beyond}.

\begin{figure*}
    \centering
    \includegraphics[width=0.95\linewidth]{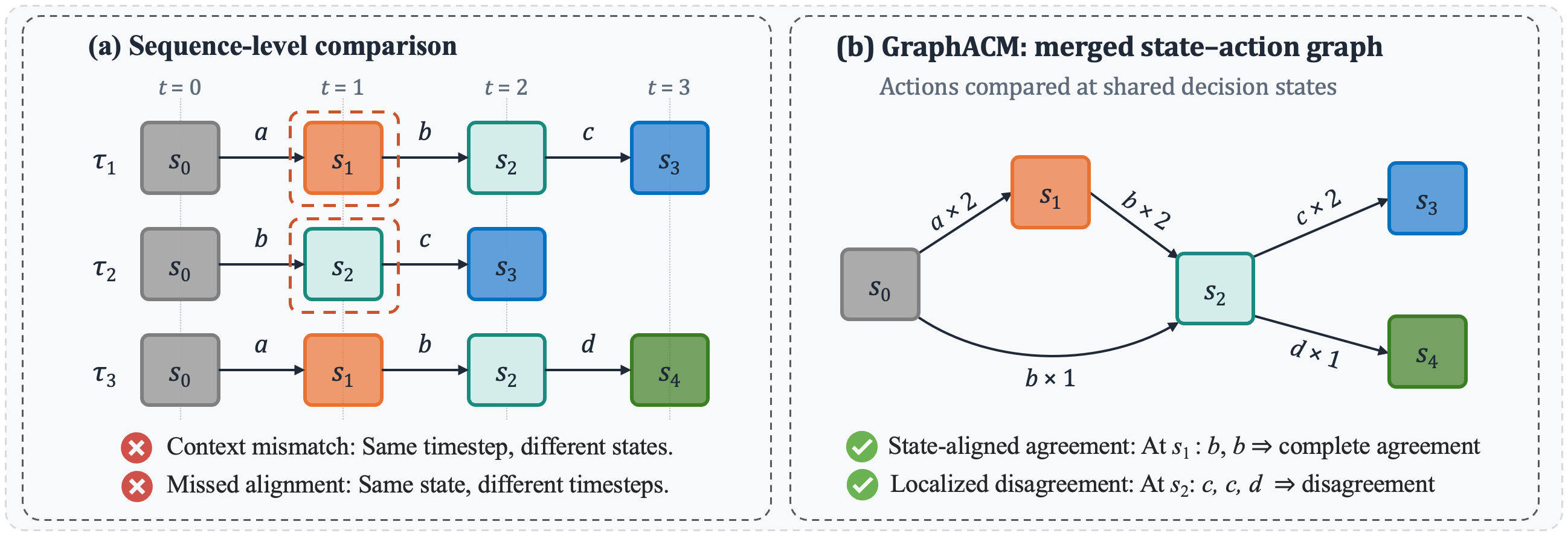}
        \caption{
Illustration of sequence-based and graph-based
behavioral consistency measurement.
(a) Position-wise comparison may align actions
originating from different decision states, while
equivalent states can occur at different timesteps.
(b) GraphACM merges equivalent states across rollouts
and compares actions at shared decision states,
identifying consistent decisions
and localized behavioral disagreement.
}
    \label{fig:illustration}
\end{figure*}

Despite these differences, existing approaches leave two important gaps. First, they primarily summarize behavioral consistency with a single task-level score, without identifying the intermediate decisions at which executions agree or diverge. Second, they establish action correspondence through sequence position or aggregate statistics, rather than the decision contexts in which actions were selected. Actions at the same step may respond to different situations, while equivalent situations may arise at different steps. Although repeated executions begin from the same initial environment, once trajectories diverge, their subsequent actions are no longer taken under the same conditions. Consequently, comparing actions without accounting for their source states can conflate differences in encountered situations with differences in the agent's decision-making behavior.

Recent graph-based reinforcement learning (RL) methods have demonstrated the value of aggregating agent trajectories through shared states. GraphGPO~\citep{cheng2026graphgpo} and G2PO~\citep{wang2026g2po} construct state-transition graphs by merging equivalent states across rollouts, with actions represented as transitions. These graphs support fine-grained credit assignment and advantage estimation for policy optimization. Although these methods also leverage multiple sampled trajectories, their objective is to optimize agent policies through training, whereas ours is to measure the behavioral consistency of a fixed agent at inference time without requiring outcome labels.

Motivated by these graph-based representations, we propose \emph{GraphACM} (Graph-based Action Consistency Measurement), a framework for fine-grained behavioral consistency measurement and uncertainty quantification in LLM agents. Given multiple rollouts of the same task, we construct a directed, action-labeled graph by merging equivalent states into shared nodes and representing actions as transitions, enabling action comparisons under equivalent decision contexts. At each shared node, we measure action agreement across
rollouts and aggregate these local measurements into a
task-level consistency score. As illustrated in
Figure~\ref{fig:illustration}, GraphACM addresses both
limitations by aligning actions based on their source
states and localizing behavioral agreement and divergence at intermediate decision points.

Our contributions are threefold:
\begin{enumerate}[leftmargin=*, itemsep=2pt, topsep=3pt, parsep=0pt, partopsep=0pt]

\item We propose GraphACM, a graph-based framework for behavioral consistency analysis that aggregates trajectories through shared states, enabling context-conditioned action comparisons.
    
    \item We formulate node-level action-consistency measures that quantify agreement at shared states and aggregate these local measurements into a task-level score, supporting fine-grained behavioral analysis and uncertainty estimation.
    
    \item We evaluate GraphACM across three LLMs and three agent benchmarks; our method outperforms existing baselines in most settings, demonstrating its superior performance in task-success discrimination.

\end{enumerate}

\section{GraphACM: Graph-based Action Consistency Measurement}

\subsection{Problem Formulation}
\label{sec:problem}

We consider an LLM agent with policy $\pi$ interacting with an environment
$\mathcal E$ to complete a task $g$ under a ReAct-style protocol
\citep{yao2023react}. For a fixed task with same initial condition, we independently sample $k\geq2$ rollouts. Rollout $i$ is
\begin{equation*}
\begin{aligned}
\tau^{(i)}
&=
\bigl(
o_0^{(i)},a_0^{(i)},o_1^{(i)},\ldots,
a_{T_i-1}^{(i)},o_{T_i}^{(i)}
\bigr),\\
&\qquad i \in \{1,\ldots,k\},
\end{aligned}
\label{eq:trajectory}
\end{equation*}

where $o_t^{(i)}$ is the observation available before action
$a_t^{(i)}$, and $o_{t+1}^{(i)}$ is the resulting observation.
$T_i$ is the number of recorded actions before termination or the
step limit; intermediate reasoning is not included. Our goal is to
measure behavioral consistency across these rollouts without using
their success labels.

\subsection{Proposed Method: GraphACM}
Our method provides a fine-grained, \emph{state-aligned} measure of behavioral consistency by comparing actions taken from equivalent states across repeated rollouts. A graph naturally supports this comparison by merging equivalent states into shared nodes and representing actions as labeled edges, revealing where trajectories share states and where their actions diverge. We first describe the graph construction, then introduce node-level and task-level action consistency measures, followed by uncertainty quantification and theoretical analysis.

\subsubsection{Graph Construction}
\label{sec:graph}
We merge the $k$ rollouts of each task into a directed,
action-labeled graph, where nodes represent abstract states
and edges represent actions, enabling actions to be aligned
by their source states.

\paragraph{Abstract states.}
The observation $o_t^{(i)}$ alone may not fully characterize
the agent's decision context, as the same environment can be
encountered with different information acquired from previous
interactions. We therefore define an abstract state
\begin{equation*}
z_t^{(i)}=\phi(e_t^{(i)},h_t^{(i)}),
\label{eq:state}
\end{equation*}
where $e_t^{(i)}$ represents the execution-relevant
environment context and $h_t^{(i)}$ summarizes information from history not captured by the
current environment context. States are considered
equivalent when their canonical representations match
under $\phi$. Benchmark-specific definitions and
canonicalization procedures are provided in
Appendix~\ref{app:graph}.

\paragraph{Node and edge.}
We represent each state as a node and each action as a directed transition to the subsequent state. Identical states across $k$ different rollouts are merged into shared nodes. The
resulting vertex set is $\mathcal V=\bigcup_{i=1}^{k}
\{z_t^{(i)}\mid 0\leq t\leq T_i\}$. For each node $v\in\mathcal V$, we define
$I_v=\{i\in\{1,\ldots,k\}\mid
\exists\,t<T_i,\;z_t^{(i)}=v\}$ as the rollouts that visit state $v$ and $n_v=|I_v|$ as their number. For each $i\in I_v$, $a_i(v)$ denotes the action
taken from $v$. Each transition is an action-labeled edge,
with edge set
$\mathcal E_G=
\{(z_t^{(i)},a_t^{(i)},z_{t+1}^{(i)})
\mid 1\leq i\leq k,\ 0\leq t<T_i\}$.

\paragraph{Merged rollout graph.}
The resulting graph
$\mathcal G=(\mathcal V,\mathcal E_G)$
is directed and action-labeled,
with each rollout corresponding to a walk through it.
By merging equivalent states, the graph reveals where
rollouts converge or diverge and enables comparison
of actions taken under equivalent abstract decision contexts.

\subsubsection{Consistency Measurement}
\label{sec:measure}

Given the constructed graph, actions originating from the same node are comparable under the chosen state abstraction. We measure action consistency at each node and aggregate these local measures to obtain task-level consistency.

\paragraph{Node-level action consistency.}
For a node $v$ visited by $n_v \geq 2$ rollouts, we define the node-level consistency $S_v$ as the average pairwise action agreement:
\begin{equation}
    S_v = \frac{1}{\binom{n_v}{2}}
    \sum_{\substack{i<j\\i,j\in I_v}}
    \mathrm{sim}\big(a_i(v),a_j(v)\big),
    \label{eq:Sv}
\end{equation}
where $I_v$ is the set of rollouts that take an action from node $v$, $n_v=|I_v|$, and $a_i(v)$ denotes the representative action taken by rollout $i$ at $v$. We use exact action matching:
\begin{equation*}
\mathrm{sim}\big(a_i(v),a_j(v)\big)=\mathbf{1}\big[a_i(v)=a_j(v)\big],
\end{equation*}
where equality requires both the action name and arguments to match. Thus, $S_v\in[0,1]$ with higher values indicating greater action agreement. For nodes visited by only one rollout, we set $S_v=0$, as no cross-rollout comparison is available.

\paragraph{Task-level action consistency.}
We aggregate node-level consistency into a global score:
\begin{equation}
  S_G=\frac{1}{k\,\bar T}\sum_v w_v\,S_v,
  \label{equ: global_consistency}
\end{equation}
where $w_v=n_v$ is the number of rollouts visiting node $v$, and $\bar T=\frac{1}{k}\sum_{i=1}^{k}T_i$ is the average rollout length, with $T_i$ denoting the number of actions in rollout $i$. Weighting each node by the number of visiting rollouts gives greater importance to states shared across more executions, while normalization by the total number of actions accounts for differences in trajectory length. The resulting score satisfies $S_G\in[0,1]$,
as proved in Appendix~\ref{app:sg_bound}.

\subsubsection{Uncertainty Quantification}
Motivated by prior work using behavioral consistency as a
signal of uncertainty for failure prediction
\citep{mehta2026agents},
we define task-level uncertainty as $U = 1-S_G$. Here, $U\in[0,1]$. Higher action consistency reflects
more stable decision-making across repeated executions,
suggesting lower uncertainty. Conversely, lower consistency
indicates greater behavioral variability and thus higher
uncertainty. The overall algorithm is summarized in Algorithm~\ref{alg:GraphACM}.

\begin{algorithm}[!h]
\caption{GraphACM}
\label{alg:GraphACM}
\begin{algorithmic}[1]

\STATE \textbf{Input:} Task $g$, policy $\pi$, environment
$\mathcal E$, number of rollouts $k$, state abstraction $\phi$.
\STATE \textbf{Output:} Task-level uncertainty $U$.

\STATE Sample $k$ independent rollouts
$\{\tau^{(i)}\}_{i=1}^{k}$ using $\pi$ on task $g$.

\STATE Construct abstract states
$z_t^{(i)}=\phi(e_t^{(i)},h_t^{(i)})$ for all rollouts.

\STATE Construct $\mathcal G=(\mathcal V,\mathcal E_G)$
by merging equivalent states and identical action-labeled transitions.

\FOR{$v\in\mathcal V$}
    \STATE Compute rollout support
    $I_v$ and $n_v=|I_v|$.

    \STATE For each $i\in I_v$, let $a_i(v)$ be
    the action from $v$ in rollout $i$.

    \STATE Compute $S_v$ using Eq.~\ref{eq:Sv}
    if $n_v\geq2$; otherwise set $S_v=0$.
\ENDFOR

\STATE Compute global consistency $S_G$
using Eq.~\ref{equ: global_consistency}.

\STATE \textbf{return} $U=1-S_G$.

\end{algorithmic}
\end{algorithm}

\subsection{Theoretical Analysis}

\begin{lemma}[Unbiasedness of local consistency]
\label{lem:local-consistency}
For a shared state $v$ with $n_v\geq2$, let
$p_v(a)$ denote the probability that a rollout
visiting $v$ selects representative action $a$.
Assume the representative actions across rollouts
are independently and identically distributed
according to $p_v$.
Then $S_v$ is an unbiased estimator of the
probability that two independent actions
$A,A'\sim p_v$ agree:
\begin{equation}
    \mathbb{E}[S_v]
    = \Pr(A=A')
    = \sum_a p_v(a)^2,
    \label{eq:local-consistency-expectation}
\end{equation}
\end{lemma}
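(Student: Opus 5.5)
The plan is to compute $\mathbb{E}[S_v]$ directly from the definition in Eq.~\eqref{eq:Sv}, using linearity of expectation, and to treat $n_v$ and the set $I_v$ as fixed. If they are random, we condition on them. Under exact matching, each summand is the indicator $\mathbf{1}[a_i(v)=a_j(v)]$, so
\begin{equation*}
\mathbb{E}[S_v]=\frac{1}{\binom{n_v}{2}}\sum_{\substack{i<j\\ i,j\in I_v}}\Pr\bigl(a_i(v)=a_j(v)\bigr).
\end{equation*}
The problem therefore reduces to evaluating a single pairwise agreement probability and checking that it is the same for every pair.

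Fix a pair $i<j$ in $I_v$. By the i.i.d.\ assumption, $(a_i(v),a_j(v))$ has the same joint law as $(A,A')$ with $A,A'\sim p_v$ independent. The representative actions form a countable (in practice finite) set, so we can partition on the common value:
\begin{equation*}
\Pr(A=A')=\sum_a\Pr(A=a,\,A'=a)=\sum_a p_v(a)\,p_v(a)=\sum_a p_v(a)^2.
\end{equation*}
Independence gives the factorization, and identical distribution gives the equal factors. This value does not depend on $(i,j)$. The sum therefore has $\binom{n_v}{2}$ identical terms, the normalization cancels, and we get $\mathbb{E}[S_v]=\sum_a p_v(a)^2=\Pr(A=A')$.

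The main obstacle is interpretive rather than computational. Both $I_v$ and $n_v$ depend on which rollouts happen to reach $v$, so the statement should be read conditionally on the event that $v$ is visited by a given set $I_v$ with $n_v\ge 2$. In the proof I will state this conditioning explicitly. The lemma's hypothesis that the actions at $v$ are i.i.d.\ according to $p_v$ is then taken to hold under this conditioning, which is how I read the assumption. The Markov-like policy behaviour behind it would make this natural. With that reading, the argument above goes through unchanged, as the expectation of a U-statistic of order two with kernel $\mathbf{1}[a=a']$. Its unbiasedness is exactly the identity derived above.
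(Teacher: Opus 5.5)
Your proposal is correct and follows the paper's proof essentially step for step. Both use linearity of expectation, reduce each pairwise term to $\Pr(A_i=A_j)=\sum_a p_v(a)^2$ via independence and identical distribution, and cancel the $\binom{n_v}{2}$ normalization. The paper also makes your conditioning explicit, treating all expectations as conditioned on fixed $v$ and $n_v$.
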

This result connects local consistency to Rényi entropy
of order two~\citep{renyi1961measures}, $H_2(p_v)=-\log\sum_a p_v(a)^2$
since $\mathbb{E}[S_v]=\exp(-H_2(p_v))$. Thus, higher expected local consistency corresponds to
lower entropy in the conditional action distribution.
The proof is provided in
Appendix~\ref{app:local-consistency-proof}.

\begin{figure*}
    \centering
    \includegraphics[width=1\linewidth]{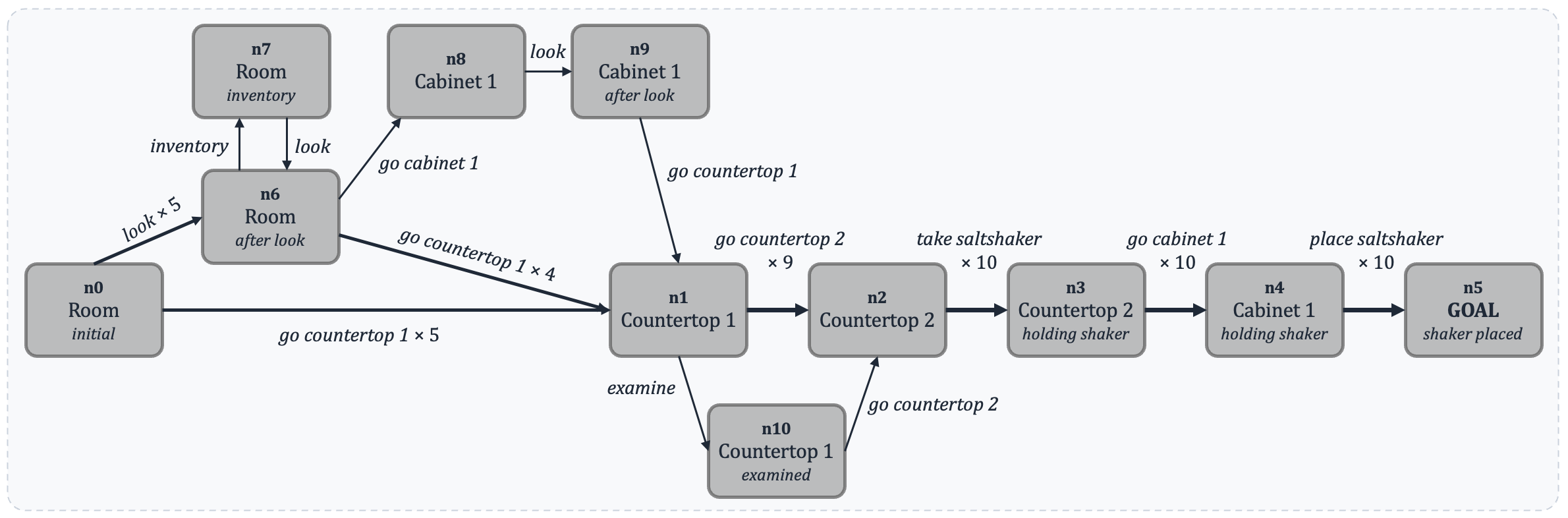}
    \caption{Merged state--action graph for an ALFWorld task
(``put some saltshaker on cabinet'') using Qwen3.8-27B
with $k=10$ successful rollouts. Nodes represent states,
and directed edges represent actions, with $\times$
indicating the number of rollouts taking each action.}
  \label{fig:graph_example}
\end{figure*}

\section{Experiments}

\subsection{Experiment Settings}

\paragraph{Benchmarks and LLM Models.}
We evaluate our method on three agent benchmarks:
ALFWorld~\citep{shridhar2021alfworld}, a text-based
household environment;
AppWorld~\citep{trivedi2024appworld}, a Python-based
API interaction environment; and
WebShop~\citep{yao2022webshop}, a text-based shopping
environment.
We use Qwen3.8-27B~\citep{qwen2026qwen38},
Gemma-4-31B-IT~\citep{google2026gemma4}, and
GPT-5.4-mini~\citep{openai2026gpt54mini} with the
verl-agent harness~\citep{feng2026gigpo}.
For each task--model pair, we independently sample
$k=10$ rollouts from the same initial state at
temperature $1.0$, without using hidden reasoning.
Episodes are limited to 50 steps for ALFWorld and
AppWorld and 30 for WebShop.
Dataset subsets and implementation details appear in
Appendix~\ref{app:exp-details}.
Table~\ref{tab:bas_res} in Appendix~\ref{app:basic}
reports success rates and mean trajectory lengths.

\paragraph{Labels and metrics.}
For each task--policy pair, we compute a task-level
consistency score from $k=10$ rollouts without using
outcome labels. A task is labeled \emph{successful}
if at least half of its rollouts succeed, and
\emph{failed} otherwise.
We report AUROC with success as the positive class,
and Spearman's rank correlation $\rho$ between
the consistency score and the empirical task success rate.

\paragraph{Baselines.}
We compare against six black-box behavioral-consistency
baselines that require no access to model internals:
{Unique Action Sequences (UAS)}
~\citep{mehta2026agents},
{Discrete Semantic Entropy (DSE)}
~\citep{ye2026function},
{Tool Sequence Similarity (TSS)} and
{Argument Consistency (AC)}
~\citep{yagubyan2026consistent},
and {Action-Set Consistency (ASC)} and
{Action-Distribution Consistency (ADC)}
~\citep{bouchard2026beyond}.
We additionally include {Verbalized Confidence (VC)}
as a non-consistency baseline, where the agent assesses
the success of each rollout through a binary Yes/No
judgment and a confidence probability~\citep{tian2023just}.

\begin{figure}[t]
    \centering
    \includegraphics[width=\linewidth]{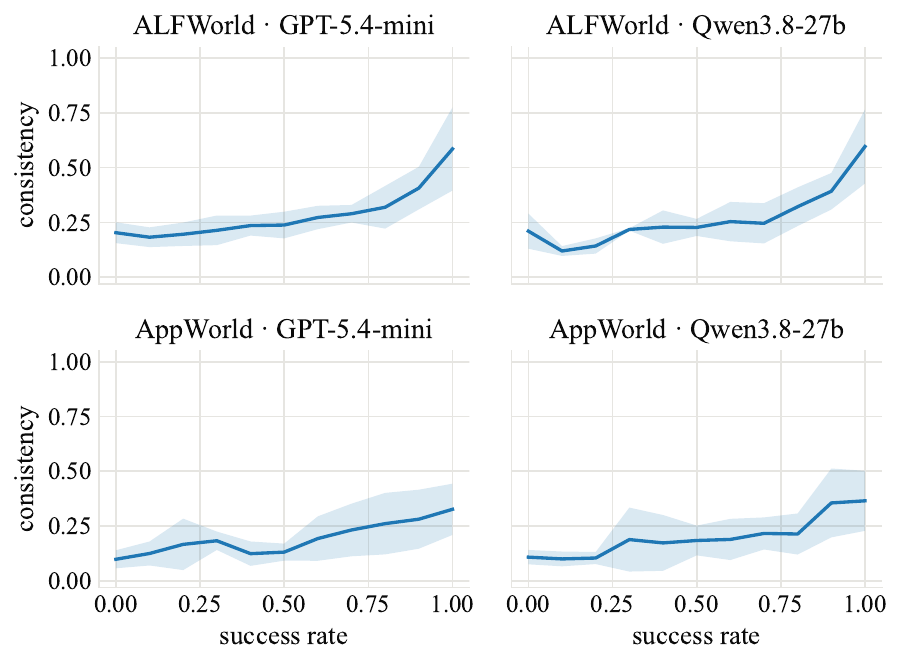}
    \caption{
Global action consistency versus empirical task success
rate. Lines show mean consistency across tasks at each
success-rate level; shaded bands indicate $\pm 1$
standard deviation.
}
\label{fig:consistency_vs_success}
    \label{fig:consistency_vs_success_rate}
\end{figure}

\subsection{Constructed graph and Local Analysis}
Figure~\ref{fig:graph_example} illustrates the merged
graph for an ALFWorld task, where 60 actions across
ten successful rollouts are merged into 11 unique
states and 14 distinct edges.
At the initial state, five rollouts choose
\texttt{look}, while the remaining five navigate
directly to \texttt{countertop 1}. When a rollout revisits a node, we use its last action
from that node as its representative action. Node-level consistency reveals that behavioral
disagreement is concentrated in the exploration phase:
$S_{n_0}=0.44$ at the initial state,
$S_{n_6}=0.60$ after \texttt{look}, and
$S_{n_1}=0.80$ at \texttt{countertop 1}. All ten rollouts then reconverge at \texttt{countertop 2} and follow identical actions thereafter, yielding $S_v=1$ at each shared decision state. This example demonstrates how state alignment
captures shared decision contexts across different
trajectory lengths and localizes behavioral
agreement and divergence at intermediate states.

\begin{table*}[t]
\centering

\begingroup
\setlength{\tabcolsep}{5pt}
\renewcommand{\arraystretch}{0.95}

\begin{tabular}{@{}lcccccccccc@{}}
\hline
{Methods}
& \multicolumn{3}{c}{{ALFWorld}}
& \multicolumn{3}{c}{{AppWorld}}
& \multicolumn{3}{c}{{WebShop}}
& {Average} \\

\cline{2-4}\cline{5-7}\cline{8-10}

&
{GPT} & {Qwen} & {Gemma}
&
{GPT} & {Qwen} & {Gemma}
&
{GPT} & {Qwen} & {Gemma}
& \\

\hline

UAS
& 0.655 & 0.741 & 0.866
& 0.476 & 0.504 & 0.531
& 0.702 & 0.592 & 0.618
& 0.632 \\

DSE
& 0.655 & 0.741 & 0.866
& 0.476 & 0.504 & 0.531
& \underline{0.703} & 0.596 & 0.613
& 0.632 \\

TSS
& 0.523 & 0.777 & 0.670
& 0.847 & 0.849 & \textbf{0.754}
& 0.519 & 0.553 & 0.571
& 0.674 \\

AC
& 0.730 & 0.728 & 0.777
& 0.819 & \textbf{0.888} & 0.704
& 0.683 & 0.611 & 0.618
& 0.729 \\

ASC
& 0.509 & \underline{0.900} & 0.773
& \textbf{0.878} & 0.835 & 0.722
& 0.500 & 0.500 & 0.500
& 0.680 \\

ADC
& 0.439 & 0.677 & 0.679
& 0.841 & 0.746 & 0.687
& 0.630 & 0.594 & 0.592
& 0.654 \\

VC
& \underline{0.829} & 0.853 & \underline{0.890}
& 0.791 & 0.873 & \underline{0.727}
& 0.633 & \textbf{0.667} & \underline{0.660}
& \underline{0.769} \\


{Ours}
& \textbf{0.907} & \textbf{0.962} & \textbf{0.922}
& \underline{0.857} & \underline{0.883} & \textbf{0.754}
& \textbf{0.718} & \underline{0.636} & \textbf{0.664}
& \textbf{0.811} \\

\hline
\end{tabular}

\endgroup
\caption{AUROC comparison of our method and baselines
for task-success prediction across ALFWorld, AppWorld,
and WebShop using GPT-5.4-mini, Qwen3.8-27B, and
Gemma-4-31B. Best results are shown in \textbf{bold} and second-best results are \underline{underlined}.}

\label{tab:main_results}
\end{table*}

\begin{figure*}
    \centering
    \includegraphics[width=1\linewidth]{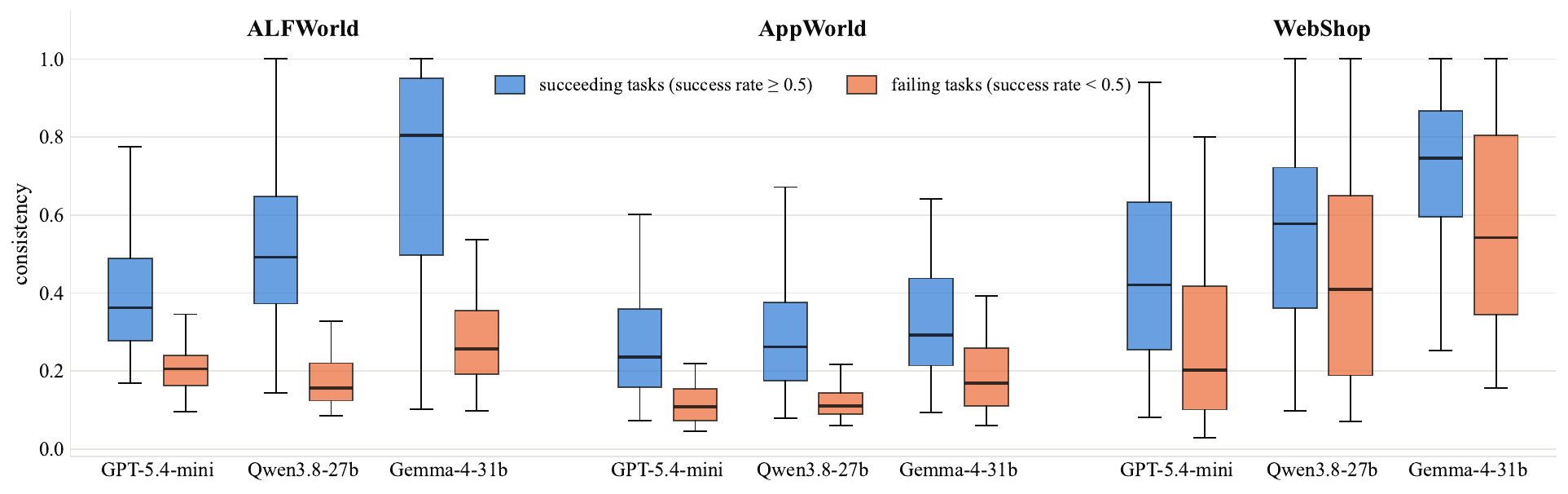}
    \caption{Consistency scores of successful and failed tasks across benchmarks and LLMs. Boxes show the median and interquartile range; whiskers extend to the furthest task within $1.5$ IQR of the box.}
    \label{fig:consistency_succ_fail}
\end{figure*}

\subsection{Consistency and task success rate}
\label{sec:consistency_task_succ_rate}

We examine the relationship between global action
consistency and task success rate. For each task--model pair, we sample $k=10$ trajectories and define the \emph{success rate} as the fraction that succeed. Figure~\ref{fig:consistency_vs_success_rate} presents results
for four model--benchmark pairs; all nine are reported
in Figure~\ref{app_fig:consistency_success_rate} in Appendix. On ALFWorld and AppWorld, consistency is positively
associated with success rate across all six pairs
(Spearman's $\rho=0.80$--$0.83$ on ALFWorld and
$0.57$--$0.74$ on AppWorld).
The increase is most pronounced at high success rates,
suggesting that consistency is particularly informative
for distinguishing reliably solved tasks from those
solved less consistently. On WebShop, the association is weaker ($\rho=0.25$--$0.40$), with some zero-success tasks exhibiting high consistency, as analyzed in Appendix~\ref{app:consistency_vs_success}. Overall, the positive associations across all nine pairs support consistency as a behavioral indicator of empirical task reliability.

\begin{table*}[t]
\centering

\begingroup
\setlength{\tabcolsep}{4pt}

\begin{tabular}{@{}lcccccccccc@{}}
\hline
{Methods}
& \multicolumn{3}{c}{{ALFWorld}}
& \multicolumn{3}{c}{{AppWorld}}
& \multicolumn{3}{c}{{WebShop}}
& {Average} \\
\cline{2-4}\cline{5-7}\cline{8-10}
&
{GPT} & {Qwen} & {Gemma}
&
{GPT} & {Qwen} & {Gemma}
&
{GPT} & {Qwen} & {Gemma}
& \\
\hline
Name only
& 0.807 & 0.948 & 0.883
& 0.860 & 0.894 & 0.756
& 0.701 & 0.627 & 0.672
& 0.794 \\
Name + arguments
& 0.907 & 0.962 & 0.922
& 0.857 & 0.883 & 0.754
& 0.718 & 0.636 & 0.664
& 0.811 \\

\hline
\end{tabular}
\endgroup

\caption{
AUROC of our method under two action-equivalence criteria:
matching action types only (name only) or matching both
action types and arguments (name + arguments).
}
\label{tab:name_only}
\end{table*}

\begin{figure*}[t]
    \centering
    \includegraphics[width=\linewidth]
    {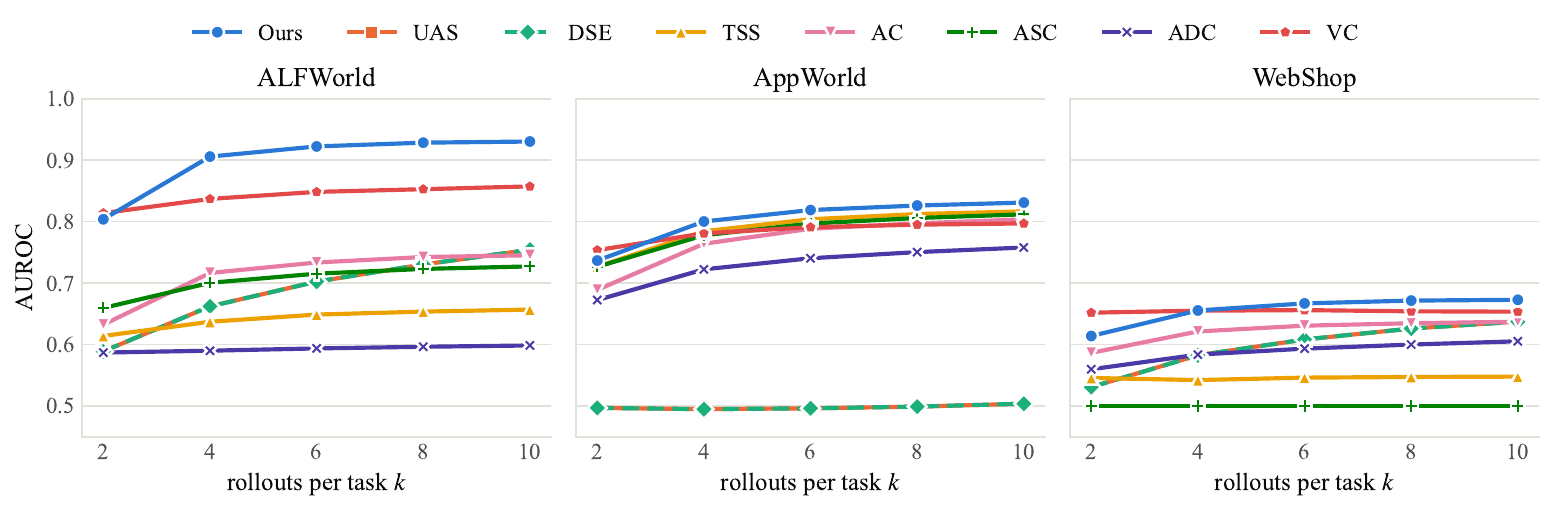}
    \caption{Effect of the number of sampled rollouts $k$ on task-success prediction AUROC across three benchmarks, averaged over three LLMs.}
    \label{fig:rollout_count}
\end{figure*}

\begin{figure*}[t]
    \centering
    \includegraphics[width=1\linewidth]{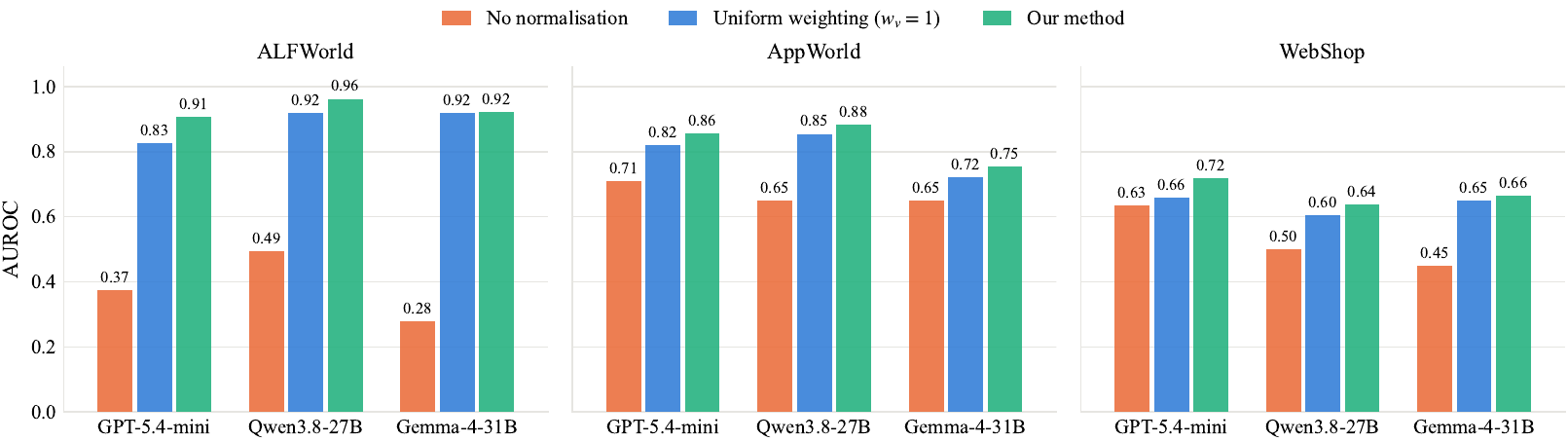}
    \caption{Ablation of episode-length normalization and node weighting. We compare the full method against variants without length normalization or with uniform node weights ($w_v=1$ instead of $w_v=n_v$). }
    \label{fig:ablation_study}
\end{figure*}

\subsection{Consistency predicts success}

We next evaluate whether consistency predicts the task outcome. A task is labeled successful if its success rate over $k=10$ rollouts is at least $0.5$, and a failure otherwise. 
Figure~\ref{fig:consistency_succ_fail} compares the global action consistency scores of the two groups. On ALFWorld and AppWorld the interquartile ranges of succeeding and failing tasks do not overlap on five of the six pairs; on WebShop the medians are ordered but the boxes overlap on all three pairs, indicating weaker separation.

Table~\ref{tab:main_results} compares the AUROC of
our consistency measure with all baselines.
Our method achieves the highest average AUROC,
$0.811$, exceeding verbalized confidence
(VC, $0.769$) and the strongest
consistency baseline (AC, $0.729$).
It ranks first or tied-first in six of the nine
settings and second in the remaining three. Besides, the results show that no consistency-based baseline performs uniformly strong. The sequence-based
measures (TSS, AC, ASC, ADC) perform best overall on
AppWorld but vary widely on ALFWorld and are weaker
on WebShop, where TSS performs near chance and ASC
is at chance. The count-based measures (UAS and
DSE) exhibit the opposite pattern: relatively strong
performance on ALFWorld and WebShop but near-chance
performance on AppWorld. In contrast, our method
achieves the highest or second-highest AUROC in
every setting, maintaining consistently strong
performance across all settings.

Overall scores are lower on WebShop than on the other benchmarks, suggesting that WebShop is the most challenging benchmark for predicting success from consistency, align our findings in Section~\ref{sec:consistency_task_succ_rate} and Figure~\ref{fig:consistency_succ_fail}. One possible reason could be that its episodes are short and follow a fixed skeleton: search, click a product, select options, buy. This means even unsuccessful rollouts may traverse similar steps and exhibit high consistency.

\subsection{Effect of Action-Equivalence Granularity}
The preceding experiments use fine-grained action
equivalence: two actions agree if they call the same
API or command \emph{with the same arguments}.
Here, we consider a coarser \emph{name-only} variant
that ignores arguments. Table~\ref{tab:name_only} reports average AUROCs
of $0.811$ and $0.794$ with and without arguments,
respectively. Retaining arguments improves average
performance on ALFWorld and WebShop, where the command
vocabularies are small and relatively generic.
For example, \texttt{go to} accounts for $41$--$60\%$
of the observed ALFWorld steps, but its destination
argument specifies which location to visit.
Ignoring arguments can therefore conceal decision
differences. On AppWorld, name-only matching performs slightly better.
The policies invoke $173$--$227$ distinct APIs, whose
names already distinguish a broad range of operations. For APIs with multiple arguments, strict matching may
unnecessarily distinguish minor parameter variations.
Thus, the appropriate granularity depends on whether
action names alone are sufficiently informative or
arguments are essential for distinguishing decisions.

\subsection{Effect of the Number of Rollouts}
\label{sec:rollout_count}

We investigate the sensitivity of consistency measures to the
number of sampled rollouts $k\in\{2,4,6,8,10\}$.
For $k<10$, we randomly sample $k$ rollouts without
replacement from the ten recorded rollouts per task,
repeating this 100 times. All methods use the same
sampled rollouts, while task labels remain fixed based
on all ten rollouts.
Figure~\ref{fig:rollout_count} reports the mean task prediction AUROC
across draws, averaged over three LLMs for each benchmark.
Our method generally outperforms the baselines across
different rollout budgets.

\subsection{Ablation Study}

We ablate two design choices in the global consistency
score (Eq.~\ref{equ: global_consistency}): normalisation
by total episode length and visitor-based node weights
$w_v=n_v$. Figure~\ref{fig:ablation_study} reports the
AUROC of each variant across all nine model--benchmark
pairs. Length normalization is important because the unnormalized numerator $\sum_v n_v S_v$ tends to increase with episode length. In ALFWorld, failed tasks have trajectories two to three times longer than successful ones (46--48 vs.\
15--23 steps), causing the unnormalized score to rank failed tasks higher. Node weighting provides a smaller but consistent benefit. Uniform weights assign equal importance to states visited by two or ten rollouts. Removing visitor-based weights decreased the average AUROC by $0.037$, with statistically significant decreases in five of the nine model--benchmark pairs. Together, these results support the effectiveness of both length normalisation and visitor-based weighting in the proposed consistency measure.

\section{Related Work}

\paragraph{Graph Construction for Agentic Reinforcement Learning.}
Recent RL methods leverage shared states and transitions across rollouts to improve step-level credit assignment. GiGPO groups actions from repeated environment states for relative-advantage estimation \citep{feng2026gigpo}. Graph-based methods further organize rollouts into shared structures. GraphGPO constructs a unified state-transition graph and assigns edge credit based on estimated progress toward the goal \citep{cheng2026graphgpo}, while G2PO merges identical observations into shared nodes and estimates edge advantages using aggregated state values \citep{wang2026g2po}. GEPO merges semantically similar observations using sentence embeddings for graph-based credit assignment \citep{yuan2026gepo}. SALT incorporates recent action--observation history into state representations and merges transitions sharing the same source state, action, and successor state \citep{li2026salt}. While these methods use shared state structures for reward-based policy optimization, our work constructs graphs to measure the behavioral consistency of fixed agents across rollouts, without requiring outcome labels at inference time.

\paragraph{Uncertainty Quantification for LLM Agents.}
UQ for LLM agents have built on approaches developed for LLMs, including probability-based scores, verbalized confidence, and sample consistency \citep{kuhn2023semantic,zhang-etal-2025-recognizing,oh2026uncertainty}. Agentic settings introduce additional challenges from function calls and multi-step interactions. UProp separates uncertainty intrinsic to a decision from uncertainty inherited from earlier decisions \citep{duan2025uprop}, while SAUP aggregates step-level uncertainty using situation-aware weights \citep{zhao2025saup}. UOPD utilize model confidence to identify intervention turns \citep{zhang2026uopd}. TRACER combines token surprisal with behavioral signals to identify high-risk episodes \citep{tayebati2026tracer}. \citet{oh2026uncertainty} represent trajectories as a dynamic Bayesian network over actions, states, and observations. RUPA propagates uncertainty along temporal and semantic dependencies in a trajectory graph \citep{ma2026rupa}, while PropUQ-MAS combines local uncertainty with uncertainty inherited through inter-agent communication \citep{liu2026propuq}. Rather than propagating uncertainty through execution dependencies, we quantify uncertainty using action agreement across repeated rollouts.

\paragraph{Behavioral Consistency for LLM Agents.}
Behavioral consistency has gained attention as an important dimension of agent reliability and uncertainty quantification. \citet{rabanser2026reliability} distinguish outcome, trajectory, and consistency within a broader reliability framework. Existing methods measure consistency across repeated executions through action sequence diversity \citep{mehta2026agents}, discrete semantic entropy over sequence clusters~\citep{farquhar2024detecting, ye2026function}, tool-call sequences \citep{yagubyan2026consistent}, or trajectory-level comparisons \citep{bouchard2026beyond}. MATU further examines recurring interaction patterns in multi-agent systems using tensor decomposition \citep{chen2026matu}. While these methods provide valuable insights into agent behavior, existing consistency measures largely rely on sequence-level or aggregate comparisons. In contrast, our work constructs state-action graphs to compare decisions made under equivalent contexts, enabling fine-grained consistency measurements.

\section{Conclusion}

We presented GraphACM, a graph-based framework for measuring behavioral consistency and quantifying uncertainty in LLM agents. By merging repeated executions into a shared state-transition graph, our method enables action comparisons under comparable decision contexts. We introduced node-level and global action consistency measures, supporting both fine-grained behavioral analysis and task-level uncertainty estimation. Experiments across three LLMs and three agent benchmarks demonstrate a positive association between behavioral consistency and task success. Our method also outperforms existing behavioral-consistency baselines and verbalized confidence in average task-success prediction performance. These findings highlight the effectiveness of state-conditioned behavioral consistency as an interpretable, training-free signal for assessing agent reliability and quantifying uncertainty.

\section*{Limitations}
Our framework has several limitations. First, behavioral consistency serves as an indirect indicator of reliability rather than a guarantee of correctness. Agents may consistently repeat incorrect decisions, as observed in some WebShop tasks. This is an inherent limitation shared by consistency-based approaches rather than a limitation specific to our method. Second, the effectiveness of state-aligned comparisons depends on the quality of state abstraction. Incorrectly merging distinct decision contexts or failing to identify equivalent states may affect consistency estimates. Third, our method requires multiple rollouts of the same task, introducing additional inference costs compared with single-execution approaches. Finally, our evaluation is limited to three LLMs and three agent benchmarks. Future work could investigate the sensitivity of our method to the state representations, as well as its generalization to more diverse agent environments.

\section*{Ethical Considerations}
High behavioral consistency does not guarantee correct
or safe behavior and should not be used
alone to authorize consequential agent actions.
Repeated rollouts also increase computational demands
and may increase environmental costs.



\bibliography{custom}

\newpage
\appendix

\section{Consistency Measurement}

\subsection{Boundedness of Task-Level Consistency}
\label{app:sg_bound}

\begin{proposition}
For $k$ rollouts with $\sum_{i=1}^{k}T_i>0$,
the task-level consistency score satisfies
$0\leq S_G\leq 1$.
\end{proposition}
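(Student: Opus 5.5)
Since $k\bar T=\sum_{i=1}^k T_i>0$, the normalizer in Eq.~\ref{equ: global_consistency} is positive, so $S_G$ is well defined. The plan is to bound the numerator $\sum_v w_v S_v=\sum_v n_v S_v$ from below by $0$ and from above by $\sum_i T_i$.

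The lower bound is immediate. Every $S_v$ is either an average of indicator values in $\{0,1\}$ (when $n_v\ge 2$) or set to $0$ (when $n_v\le 1$), so $S_v\in[0,1]$. Also $w_v=n_v\ge 0$. Hence every summand is nonnegative and $S_G\ge 0$.

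For the upper bound, I use $S_v\le 1$ to get $\sum_v n_v S_v\le \sum_v n_v$. The remaining step is to show $\sum_v n_v\le \sum_{i=1}^k T_i$, which I do by double counting. Write
\[
\sum_v n_v=\sum_v\sum_{i=1}^k \mathbf 1[i\in I_v]=\sum_{i=1}^k\bigl|\{v : \exists\, t<T_i,\ z_t^{(i)}=v\}\bigr|.
\]
For a fixed rollout $i$, the set on the right is the image of the map $t\mapsto z_t^{(i)}$ on $\{0,\dots,T_i-1\}$. That domain has $T_i$ elements, so the set has at most $T_i$ elements; the inequality is strict when rollout $i$ revisits a state. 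Summing over $i$ gives $\sum_v n_v\le\sum_i T_i=k\bar T$, and therefore $S_G\le 1$.

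The main point of care is the definition of $I_v$. It counts only visits at times $t<T_i$, that is, states from which an action is actually taken. Terminal states $z_{T_i}^{(i)}$ therefore do not contribute to $n_v$. If they were counted, each rollout could contribute up to $T_i+1$ states and the bound would fail. Revisits of a node within one rollout are handled automatically, because $n_v$ counts each rollout at most once, and the representative-action convention does not affect the bound.
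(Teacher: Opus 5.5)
Your proof is correct and takes the same approach as the paper. The paper also uses $0\le S_v\le 1$ and $w_v=n_v\ge 0$, then bounds $\sum_v n_v\le\sum_i T_i$ on the grounds that each rollout contributes to at most $T_i$ node weights; your double-counting identity, your check that the normalizer is positive, and your remark that only visits with $t<T_i$ enter $I_v$ spell out that one-line step rigorously.
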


\begin{proof}
By definition, $0\leq S_v\leq 1$ and $w_v=n_v\geq 0$.
Since $n_v$ counts the number of distinct rollouts
visiting node $v$, each rollout contributes at most
$T_i$ to the sum of node weights. Therefore,
\begin{equation*}
    \sum_{v\in\mathcal V}n_v
    \leq \sum_{i=1}^{k}T_i
    =k\bar T.
\end{equation*}
It follows that
\begin{equation*}
    0\leq S_G
    =\frac{\sum_{v\in\mathcal V}n_vS_v}{k\bar T}
    \leq\frac{\sum_{v\in\mathcal V}n_v}{k\bar T}
    \leq 1.
\end{equation*}
Thus, $S_G\in[0,1]$.
\end{proof}

\subsection{Proof of Local Consistency and Its Entropy Interpretation}
\label{app:local-consistency-proof}

Consider a shared state $v$ with $n_v\geq2$ supporting rollouts.
Let $A_1,\ldots,A_{n_v}$ denote their representative actions,
assumed to be i.i.d.\ from the conditional action distribution
$p_v$, where $p_v(a)=\Pr(A_i=a)$.
All expectations and probabilities are conditioned on fixed
$v$ and $n_v$.

Local consistency measures the average pairwise action agreement:
\begin{equation*}
    S_v = \frac{1}{\binom{n_v}{2}}
    \sum_{i<j}\mathbf{1}[A_i=A_j].
\end{equation*}

\begin{proof}[Proof of Lemma~\ref{lem:local-consistency}]
By linearity of expectation and independence,
\begin{equation*}
\begin{aligned}
    \mathbb{E}[S_v]
    &= \frac{1}{\binom{n_v}{2}}
       \sum_{i<j}\mathbb{E}[\mathbf{1}[A_i=A_j]] \\
    &= \frac{1}{\binom{n_v}{2}}
       \sum_{i<j}\Pr(A_i=A_j) \\
    &= \frac{1}{\binom{n_v}{2}}
       \sum_{i<j}\sum_a \Pr(A_i=a,A_j=a) \\
    &= \frac{1}{\binom{n_v}{2}}
       \sum_{i<j}\sum_a p_v(a)^2 \\
    &= \sum_a p_v(a)^2.
\end{aligned}
\label{eq:local-consistency-proof}
\end{equation*}
The final expression equals the probability that two
independent actions drawn from $p_v$ agree.
Thus, $S_v$ is an unbiased estimator of the pairwise
action-agreement probability.

Furthermore, the Rényi entropy of order two is defined as
\begin{equation*}
    H_2(p_v)=-\log\sum_a p_v(a)^2.
\end{equation*}
It follows that
\begin{equation*}
    \mathbb{E}[S_v]
    =\sum_a p_v(a)^2
    =\exp(-H_2(p_v)).
\end{equation*}
Therefore, expected local consistency is monotonically
decreasing in the Rényi entropy of the conditional
action distribution.
\end{proof}

\section{Graph Construction}
\label{app:graph}

Table~\ref{tab:statekey} summarizes the components of the
abstract state $z_t=\phi(e_t,h_t)$ for each benchmark.
For ALFWorld, we adopt the state representation of
GraphGPO~\citep{cheng2026graphgpo}, where $e_t$ includes
the observation, agent location, held object and its status,
and admissible commands, while $h_t$ records the history
of moved objects using GraphGPO's rule-based tracker.
For AppWorld, we classify API calls as state-changing or
read-only based on the benchmark's OpenAPI specifications.
The environment component $e_t$ consists of the sequence
of successful state-changing calls, representing database
updates, while $h_t$ records answered read calls as a proxy
for previously acquired information. For WebShop, $e_t$ captures the current page, product,
view, and selected options, while $h_t$ records previously
opened product pages and sections, reflecting the agent's
exploration history.
Together, these abstractions enable actions to be compared
under equivalent abstract decision contexts across rollouts.

\begin{table}[t]
\centering\small
\begin{tabular}{@{}p{1.3cm}p{2.65cm}p{2.45cm}@{}}
\toprule
Benchmark & Environment context $e_t$ & Accumulated knowledge $h_t$ \\
\midrule
ALFWorld & Observation, location, held object, admissible commands & Moved-object history \\
AppWorld & Successful state-changing API calls & Answered read calls \\
WebShop  & Page, product, view, selected options & Opened product pages and sections \\
\bottomrule
\end{tabular}
\caption{Benchmark-specific components of the abstract state $z_t=\phi(e_t,h_t)$.}
\label{tab:statekey}
\end{table}



\begin{figure*}[t]
    \centering
    \includegraphics[width=0.8\linewidth]{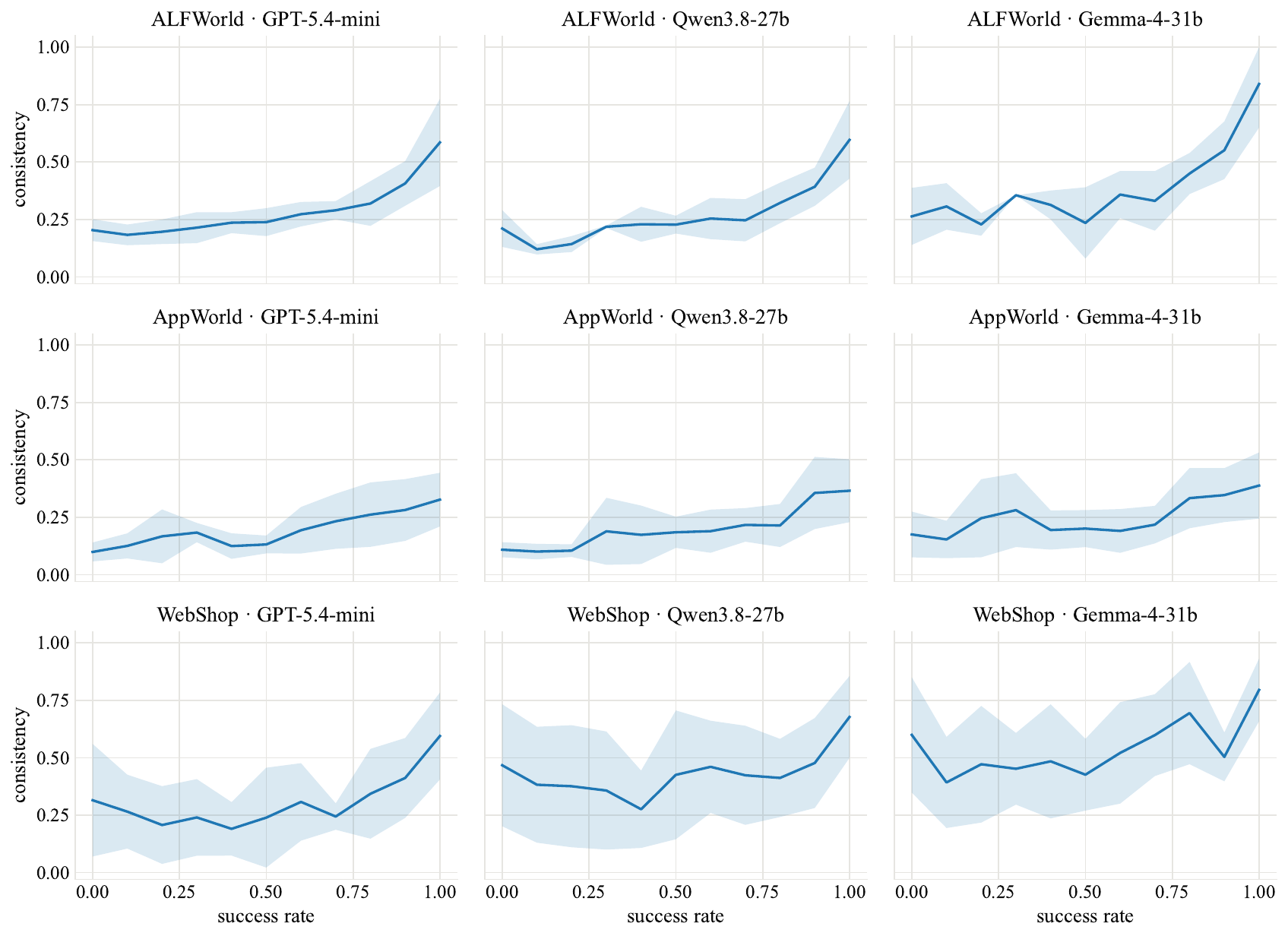}
    \caption{Global action consistency versus task success rate across three benchmarks and three models. Success rate is computed from $k=10$ rollouts per task--model pair.}
    \label{app_fig:consistency_success_rate}
\end{figure*}

\section{Experiment Settings}
\label{app:exp-details}
\paragraph{Benchmark and Agent Configuration.}
We use all 134 ALFWorld tasks from the
\texttt{valid\_unseen} split, covering six household
task categories; 168 AppWorld tasks from the
\texttt{test\_normal} split; and the first 200 WebShop
test instructions. All tasks are in English.
Generation is capped at 512 tokens per response for
ALFWorld and 1,024 for AppWorld and WebShop. Rollout success follows ALFWorld's goal check,
requires all AppWorld unit tests to pass, and
requires a terminal reward of $1$ in WebShop.

\paragraph{Computing Infrastructure and Artifact Use.}
Qwen runs locally on four NVIDIA RTX A6000 GPUs requiring
approximately 125 GPU-hours for final runs,
while Gemma and GPT-5.4-mini are accessed via APIs with total budgets of about \$45 and \$245, respectively. Parameter counts of the open-weight models are given by their names (27B, 31B); that of GPT-5.4-mini is not public. We use ALFWorld, WebShop,
and AppWorld for their intended research purposes.

\section{Experimental Results}
\subsection{Basic Trajectory Statistics}
\label{app:basic}

Table~\ref{tab:bas_res} reports success rates and mean
trajectory lengths for three models across ALFWorld,
AppWorld, and WebShop.
On ALFWorld, Qwen3.8-27B achieves the highest success rate
(80.52\%) and the shortest mean trajectory length (21.66 steps).
On AppWorld, GPT-5.4-mini achieves the highest success rate
(65.30\%) with the fewest steps on average (15.90).
Across all three models, WebShop exhibits lower success
rates (40.10\%--43.60\%) and shorter mean trajectories
(10.52--13.08 steps) than the other benchmarks.

\begin{table}[t]
\centering
\footnotesize
\setlength{\tabcolsep}{3.5pt}
\renewcommand{\arraystretch}{1.1}
\label{tab:trajectory_statistics}
\begin{tabular}{@{}llrr@{}}
\toprule
\textbf{Dataset} & \textbf{Model}
& \textbf{Success (\%)}
& \textbf{Mean steps} \\
\midrule

\multirow{3}{*}{ALFWorld} & Qwen3.8-27B    & 80.52 & 21.66 \\
                          & Gemma-4-31B-IT & 67.01 & 25.27 \\
                          & GPT-5.4-mini   & 53.88 & 33.23 \\

\midrule
\multirow{3}{*}{AppWorld} & Qwen3.8-27B    & 63.33 & 28.36 \\
                          & Gemma-4-31B-IT & 56.43 & 22.82 \\
                          & GPT-5.4-mini   & 65.30 & 15.90 \\
\midrule
\multirow{3}{*}{WebShop}
& Qwen3.8-27B     & 42.35 & 10.52 \\
& Gemma-4-31B  & 40.10 & 11.54 \\
& GPT-5.4-mini    & 43.60 & 13.08 \\
\bottomrule

\end{tabular}
\caption{Basic trajectory statistics across ALFWorld,
AppWorld, and WebShop.}
\label{tab:bas_res}
\end{table}

\begin{table*}[t]
\centering

\begingroup
\small                          
\setlength{\tabcolsep}{5pt}
\renewcommand{\arraystretch}{0.95}
\newcommand*{\tabci}[2]{{\scriptsize[#1, #2]}}   

\begin{tabular}{@{}lcccccccccc@{}}
\hline
{Methods}
& \multicolumn{3}{c}{{ALFWorld}}
& \multicolumn{3}{c}{{AppWorld}}
& \multicolumn{3}{c}{{WebShop}}
& {Average} \\

\cline{2-4}\cline{5-7}\cline{8-10}

&
{GPT} & {Qwen} & {Gemma}
&
{GPT} & {Qwen} & {Gemma}
&
{GPT} & {Qwen} & {Gemma}
& \\

\hline

UAS
& 0.655 & 0.741 & 0.866
& 0.476 & 0.504 & 0.531
& 0.702 & 0.592 & 0.618
& 0.632 \\ \noalign{\vskip -2pt}
& \tabci{.60}{.71} & \tabci{.70}{.79} & \tabci{.82}{.91}
& \tabci{.43}{.51} & \tabci{.50}{.51} & \tabci{.50}{.56}
& \tabci{.63}{.77} & \tabci{.51}{.67} & \tabci{.54}{.70}
& \tabci{.61}{.66} \\[2pt]

DSE
& 0.655 & 0.741 & 0.866
& 0.476 & 0.504 & 0.531
& \underline{0.703} & 0.596 & 0.613
& 0.632 \\ \noalign{\vskip -2pt}
& \tabci{.60}{.71} & \tabci{.70}{.79} & \tabci{.82}{.91}
& \tabci{.43}{.51} & \tabci{.50}{.51} & \tabci{.50}{.56}
& \tabci{.64}{.77} & \tabci{.52}{.67} & \tabci{.54}{.69}
& \tabci{.61}{.66} \\[2pt]

TSS
& 0.523 & 0.777 & 0.670
& 0.847 & 0.849 & \textbf{0.754}
& 0.519 & 0.553 & 0.571
& 0.674 \\ \noalign{\vskip -2pt}
& \tabci{.42}{.63} & \tabci{.67}{.87} & \tabci{.58}{.76}
& \tabci{.78}{.91} & \tabci{.77}{.92} & \tabci{.68}{.83}
& \tabci{.44}{.60} & \tabci{.47}{.64} & \tabci{.49}{.65}
& \tabci{.64}{.71} \\[2pt]

AC
& 0.730 & 0.728 & 0.777
& 0.819 & \textbf{0.888} & 0.704
& 0.683 & 0.611 & 0.618
& 0.729 \\ \noalign{\vskip -2pt}
& \tabci{.65}{.81} & \tabci{.62}{.83} & \tabci{.69}{.86}
& \tabci{.75}{.89} & \tabci{.82}{.95} & \tabci{.62}{.78}
& \tabci{.61}{.75} & \tabci{.53}{.69} & \tabci{.54}{.69}
& \tabci{.70}{.76} \\[2pt]

ASC
& 0.509 & \underline{0.900} & 0.773
& \textbf{0.878} & 0.835 & 0.722
& 0.500 & 0.500 & 0.500
& 0.680 \\ \noalign{\vskip -2pt}
& \tabci{.40}{.61} & \tabci{.83}{.95} & \tabci{.67}{.86}
& \tabci{.82}{.93} & \tabci{.76}{.90} & \tabci{.64}{.80}
& \tabci{.50}{.50} & \tabci{.50}{.50} & \tabci{.50}{.50}
& \tabci{.66}{.70} \\[2pt]

ADC
& 0.439 & 0.677 & 0.679
& 0.841 & 0.746 & 0.687
& 0.630 & 0.594 & 0.592
& 0.654 \\ \noalign{\vskip -2pt}
& \tabci{.34}{.54} & \tabci{.52}{.81} & \tabci{.58}{.77}
& \tabci{.77}{.90} & \tabci{.66}{.83} & \tabci{.60}{.77}
& \tabci{.55}{.70} & \tabci{.51}{.67} & \tabci{.51}{.67}
& \tabci{.62}{.69} \\[2pt]

VC
& \underline{0.829} & 0.853 & \underline{0.890}
& 0.791 & 0.873 & \underline{0.727}
& 0.633 & \textbf{0.667} & \underline{0.660}
& \underline{0.769} \\ \noalign{\vskip -2pt}
& \tabci{.76}{.89} & \tabci{.78}{.92} & \tabci{.83}{.94}
& \tabci{.70}{.87} & \tabci{.80}{.94} & \tabci{.65}{.81}
& \tabci{.56}{.71} & \tabci{.59}{.74} & \tabci{.60}{.72}
& \tabci{.74}{.80} \\[2pt]


Ours
& \textbf{0.907} & \textbf{0.962} & \textbf{0.922}
& \underline{0.857} & \underline{0.883} & \textbf{0.754}
& \textbf{0.718} & \underline{0.636} & \textbf{0.664}
& \textbf{0.811} \\ \noalign{\vskip -2pt}
& \tabci{.85}{.95} & \tabci{.93}{.99} & \tabci{.87}{.96}
& \tabci{.79}{.91} & \tabci{.82}{.94} & \tabci{.68}{.83}
& \tabci{.65}{.78} & \tabci{.56}{.71} & \tabci{.59}{.74}
& \tabci{.79}{.84} \\

\hline
\end{tabular}

\endgroup

\caption{AUROC for predicting task success across nine model--benchmark pairs. Higher is better. Brackets show 95\% confidence intervals from 2,000 task-level bootstrap resamples, shared by all methods within each column. For the average AUROC, each benchmark's task resample is shared across its three models. Best results are shown in bold and second-best results are underlined.}
\label{app_tab:main_results_ci}
\end{table*}

\subsection{Consistency and Task Success Rate}
\label{app:consistency_vs_success}

Figure~\ref{app_fig:consistency_success_rate} extends the analysis in Section~\ref{sec:consistency_task_succ_rate} to all nine model--benchmark pairs. On ALFWorld and AppWorld, consistency generally increases with success rate across all three models. These patterns suggest that tasks solved successfully in more rollouts also tend to elicit more consistent decisions, supporting the positive associations reported in the main text. WebShop exhibits a similar but weaker positive trend. Notably, some zero-success tasks exhibit high consistency, reflecting consistent but unsuccessful behavior.


\subsection{Consistency Predicts Success}
\label{app:consistency_predicts_success}

Table~\ref{app_tab:main_results_ci} supplements Table~\ref{tab:main_results} with 95\% confidence intervals from 2,000 task-level bootstrap resamples. Resamples are shared across methods within each setting and across models within each benchmark when computing the average AUROC, preserving task-level dependencies. Our method achieves the highest average AUROC ($0.811$, $[0.79,0.84]$), with its lower bound exceeding the upper bound of every consistency-based baseline, including AC ($0.729$, $[0.70,0.76]$). These results support our method’s effectiveness in predicting task success and its advantage over existing consistency-based baselines.

\end{document}